\documentclass{article} 
\usepackage{iclr2024_conference,times}


\usepackage{hyperref}
\usepackage{url}

\usepackage[utf8]{inputenc} 
\usepackage[T1]{fontenc}    
\usepackage{hyperref}       
\usepackage{url}            
\usepackage{booktabs}       
\usepackage{amsfonts}       
\usepackage{nicefrac}       
\usepackage{microtype}      
\usepackage{xcolor}         
\usepackage{dsfont}
\usepackage{amssymb}
\usepackage{amsthm}

\usepackage[utf8]{inputenc} 
\usepackage[T1]{fontenc}    
\usepackage{booktabs}       
\usepackage{amsfonts}       
\usepackage{nicefrac}       
\usepackage{microtype}      
\usepackage{xcolor}         
\usepackage{graphicx}
\usepackage{soul}
\usepackage{amsmath}

\usepackage{amsmath,amsfonts,bm}









\def\eqref#1{equation~\ref{#1}}









\def\1{\bm{1}}










\DeclareMathAlphabet{\mathsfit}{\encodingdefault}{\sfdefault}{m}{sl}
\SetMathAlphabet{\mathsfit}{bold}{\encodingdefault}{\sfdefault}{bx}{n}


\def\gH{{\mathcal{H}}}

\def\gX{{\mathcal{X}}}

\newtheorem{definition}{Definition}

\newtheorem{theorem}{Theorem}

\newtheorem{lemma}[theorem]{Lemma}
\newtheorem{corollary}[theorem]{Corollary}

\newcommand{\Mat}{\boldsymbol}
\newcommand{\Set}{\mathcal}

\newcommand{\real}{\mathbb{R}}

\newcommand{\Prob}{\mathbb{P}}
\newcommand{\mean}{\mathbb{E}}
\newcommand{\radem}{\mathcal{R}}
\newcommand{\LS}{L_{\Set{S}}}
\newcommand{\LD}{L_{\Set{D}}}
\newcommand{\LSp}{L_{\Set{S^\prime}}}



\DeclareMathOperator{\ERM}{ERM}
\DeclareMathOperator*{\argmax}{arg\,max}
\DeclareMathOperator*{\argmin}{arg\,min}

\PassOptionsToPackage{hyphens}{url}\usepackage{hyperref}
\hypersetup{colorlinks=true}

\title{Generalization Error Analysis for Sparse Mixture-of-Experts: A Preliminary Study}


\author{Jinze Zhao$^{1}$, Peihao Wang$^{1}$,  Zhangyang Wang$^{1}$\\
$^{1}$Department of Electrical and Computer Engineering, UT Austin
}

%

\iclrfinalcopy 
\begin{document}

\maketitle

\begin{abstract}
Mixture-of-Experts (MoE) represents an ensemble methodology that amalgamates predictions from several specialized sub-models (referred to as experts). This fusion is accomplished through a router mechanism, dynamically assigning weights to each expert's contribution based on the input data. Conventional MoE mechanisms select all available experts, incurring substantial computational costs. In contrast, Sparse Mixture-of-Experts (Sparse MoE) selectively engages only a limited number, or even just one expert, significantly reducing computation overhead while empirically preserving, and sometimes even enhancing, performance. Despite its wide-ranging applications and these advantageous characteristics, MoE's theoretical underpinnings have remained elusive. In this paper, we embark on an exploration of Sparse MoE's generalization error concerning various critical factors. Specifically, we investigate the impact of the number of data samples, the total number of experts, the sparsity in expert selection, the complexity of the routing mechanism, and the complexity of individual experts. Our analysis sheds light on \textit{how \textbf{sparsity} contributes to the MoE's generalization}, offering insights from the perspective of classical learning theory.
\end{abstract}

\section{Introduction}
\label{sec:intro}
Sparse Mixture of Experts (SMoE) ~\citep{shazeer2017outrageously}, has demonstrated remarkable potential in the realm of neural networks, particularly in the context of scaling model size without incurring significant computational overhead. SMoE achieves this by ingeniously partitioning neural networks into smaller, more focused networks known as \textit{expert} networks, which are then selectively and sparsely combined using a data-dependent neural network referred to as the \textit{router}. This approach results in a substantial increase in the total number of parameters while keeping computational costs relatively stable, owing to the fact that only a small subset of experts is chosen for each data point. Furthermore, SMoE is promising to improve model generalization, particularly when dealing with multi-domain data, as it allows a group of experts to learn collaboratively and generalize to unseen domains compositionally. However, despite these compelling advantages, our theoretical understanding of MoE remains elusive. It is often counterintuitive that increasing model size would enhance generalization, as conventional wisdom suggests the opposite effect. 

To address this paradox, this work presents the generalization bound for the Sparse MoE model, by drawing inspiration from classical learning theory concepts ~\citep{natarajan1989learning, vapnik95, shalev-shwartz_ben-david_2022}. We assert that our generalization bound is \textbf{model-agnostic}, offering \textbf{versatility} and \textbf{applicability} across various expert/router-based models when the upper bounds for the complexity metrics of these base models are incorporated. Our main results can be informally stated as:

\begin{itemize}
\item We delivered a \textbf{generalization error bound} for SMoE, which depends on (only) its high-level structure hyperparameters. Our generalization error bound can be applied to generic SMoE model, regardless of the expert or router implementation. 

\item Our generalization bound is in particular \textbf{sparsity-aware}. More specifically, Theorem \ref{thm:main} shows that the generalization error scales with the 'sparsity pattern' by $O \left( \sqrt{k(1+\log(\frac{T}{k}))} \right)$ where $T$ is the number of all experts and $k$ is the number of selected experts, meaning the generalization error bound will grow tighter once we select less experts. The analysis sheds light on \textit{how \textbf{sparsity} contributes to the MoE's generalization even at increasing total model sizes}, compared to conventional MoEs selecting all available experts.   
\end{itemize}

\subsection{Related Work}
\textbf{Mixture of Experts:} The Mixture of Experts (MoE) model, originally introduced by \cite{jacobs1991adaptive} and later extended by \cite{jordan1994hierarchical}, is a framework that integrates multiple specialized sub-models through a router network. This enables the adaptive weighting of their contributions based on the input data distribution, thereby enhancing the overall predictive performance. Recent advancements in the field, such as the works by \cite{shazeer2017outrageously} and \cite{eigen2013learning}, have harnessed this concept within the realm of Deep Learning. These efforts introduced 'routing sparsity', which significantly reduces computational costs and enables the scaling of models to sizes with trillions of parameters. More recently, \cite{fedus2022switch} and \cite{jiang2024mixtral} pushed the boundaries of sparsity in MoE models by aggressively routing input data to sparse experts while paradoxically preserving performance and scaling up model size in language modeling. 


\noindent \textbf{Theoretical Understanding of MoEs: } Recent work ~\citep{chen2022towards} formally studied how the SMoE layer reduces the training error better than using a single expert, and why such mixture model will not collapse into a single model. Importantly, when training an SMoE layer based on the data generated from the “mixture of class" distribution using gradient descent, the authors proved that each expert of the SMoE model will be specialized to a specific portion of the data (at least one cluster), and meanwhile the router can learn the cluster-center features and route the input data to the right experts. More recently, a series of work ~\citep{nguyen2023convergence,nguyen2023demystifying,nguyen2023general,nguyen2023statistical} tried to establish the convergence rates of density estimation and parameter estimation of MoE by defining Voronoi-based losses which describes the interaction between gating function and experts, and explained why Top-1 gating can lead to faster convergence rates of parameter estimation over other gating mechanisms. However, we opt to formally study the \textit{generalization} benefit of Sparsity in MoE modeling, under learning theory perspectives.
 
\section{Preliminaries} \label{sec:setting}

\textbf{Setup:} We mainly focus on \textit{binary classification} for its convenience of analysis.
Consider a distribution: $\Set{D} \sim \Prob(\Mat{x}, y)$, where $\Mat{x} \in \real^d$ is a the raw feature vector, and $y \in \{+1, -1\}$ is the corresponding binary label. 
Same with the classical learning settings, we assume our dataset $\Set{S}$ consists of $m$ i.i.d. samples, i.e., 
\begin{align}
\Set{S} = \{(\Mat{x}_1, y_1), \cdots, (\Mat{x}_m, y_m)\} \overset{i.i.d.}{\sim} \Set{D}^{m}
\end{align}
Given a learner/classifier $f: \real^d \rightarrow \{+1, -1\}$, we define the empirical loss over training set to be
$\Set{S}$: $\LS(f) = \frac{1}{m} \sum_{i=1}^{m} \ell(f(\Mat{x}_i), y_i)$,
and the truth error evaluated over the whole data domain: 
$\LD(f) = \mean_{(\Mat{x}, y) \sim \Set{D}} \ell(f(\Mat{x}), y)$,
where $\ell: \{+1, -1\}^2 \rightarrow \real$ is the loss function.
We define a class of functions as the hypothesis space $\Set{F}$.
At the training stage, our learning algorithm searches for a learner $f \in \Set{F}$ which minimizes the empirical loss, i.e., running Empirical Risk Minimization (ERM): 
\begin{align}
\ERM(\Set{S}) = \argmin_{f \in \Set{F}} \LS(f),
\end{align}
At the testing stage, we compute $\LD(\ERM(\Set{S}))$ to assess the model performance.
The generalization error of hypothesis $f \in \Set{F}$ is: 
\begin{align}
\epsilon(f, \Set{S}) = \left\lvert \LS(f) - \LD(f) \right\rvert
\end{align}

\subsection{Notations for Sparse Mixture-of-Experts}

We define the (simplified) SMoE ~\citep{shazeer2017outrageously} model $f: \real^d \rightarrow \real$ as below:
\begin{align} \label{eqn:smoe}
f(\Mat{x}) = \sum_{j=1}^{T} a(\Mat{x})_j h_j(\Mat{x}) \quad \text{subject to } \sum_{j=1}^{T} \mathds{1}\{a(\Mat{x})_j \ne 0\} = k \quad \forall \Mat{x} \in \real^{d},
\end{align}
where $a(\Mat{x}): \real^d \rightarrow \real^T$ is the router function satisfying $\lVert a(\Mat{x}) \rVert_0 = k$, and $h_j(\Mat{x}): \real^d \rightarrow \real$ is a expert learner for every $j = 1, \cdots, T$.
Intuitively, the entire SMoE model contains $T$ expert learners, thus, its capacity is $T$ times larger as one expert learner.
However, at each forward pass, only $k$ experts will be activated for each data sample, leading to comparably low inference costs as activating all experts.

In this paper, we consider the router function proposed by \cite{shazeer2017outrageously}, which first chooses the $k$ output logits, sets the remainders to zeros, and applies softmax to normalize the chosen entries:
\begin{align} \label{eqn:gate_output}
a(\Mat{x})_j = \left\{
\begin{array}{ll}
\frac{\exp(g(\Mat{x})_j)}{\sum_{t \in \Set{J}(\Mat{x})} \exp(g(\Mat{x})_t) } & \text{if } j \in \Set{J}(\Mat{x}) \\ \\
0 & \text{if } j \notin \Set{J}(\Mat{x})
\end{array}\right. ,
\end{align}
where $g: \real^d \rightarrow \real^T$ computes the weights for each expert, and $\Set{J}(\Mat{x})$ finds a sparse mask with at most $k$ non-zero entries according to $\Mat{x}$, i.e., $\lvert \Set{J}(\Mat{x}) \rvert = k, \forall \Mat{x} \in \real^{d}$.
For example, $\Set{J}(\Mat{x})$ selects the indices of top-$k$ largest logits from $g(\Mat{x})$.

Below, we formally state the hypothesis space of an SMoE, which is composed of the hypotheses of both router and expert learners.
\begin{definition} \label{dfn:smoe}
Suppose all expert learners $h_1, \cdots, h_T \in \Set{H}$ is selected from the same hypothesis space $\Set{H}$, and $k$-sparse router function $a \in \Set{A}$ is chosen from the hypothesis space $\Set{A}$. Define the hypothesis space of the SMoE model with $T$ experts and $k$-sparse router function as below:
\begin{align}
\Set{F}(T, k) = \left\{ f(\Mat{x}) = \sum_{j=1}^{T} a(\Mat{x})_j h_j(\Mat{x}) : h_1, \cdots, h_T \in \Set{H}, a \in \Set{A} \right\}
\end{align}
\end{definition}

\subsection{Complexity Metrics and Main Proof Tools}

Given a space $Z$ and a fixed distribution $D$, let $S=\left\{z_1, \ldots, z_m\right\}$ be a set of examples drawn i.i.d. from $D$. Furthermore, let $\mathcal{F}$ be a class of functions $f: Z \rightarrow \mathbb{R}$.
\begin{definition}[Rademacher complexity] \label{dfn:radem_complexity}
The Rademacher complexity of $\Set{F}$ is defined as $\radem_m(\mathcal{F}) = \mean_{\Set{S} \sim \Set{D}^m} \left[ \radem_m(\Set{F}, \Set{S})   \right]$, where $\radem_m(\Set{F}, \Set{S})=\mean_\sigma\left[\sup _{f \in \mathcal{F}}\left(\frac{1}{m} \sum_{i=1}^m \sigma_i f\left(z_i\right)\right)\right]$ is the Empirical Rademacher complexity with $\sigma_1, \ldots, \sigma_m$ to be Rademacher random variables.

\end{definition}


\begin{definition}[Natarajan Dimension] \label{dfn:natarajan_dim}
The Natarajan Dimension of $\gH$, denoted by $d_N(\Set{H})$, is the maximal size of a multiclass-shattered set $C \subset \gX$. We say that a set $C \subset \mathcal{X}$ is shattered ~\citep{shalev-shwartz_ben-david_2022} by $\mathcal{H}$ if there exist two functions $f_0, f_1: C \rightarrow[k]$ such that
    For every $x \in C, f_0(x) \neq f_1(x)$.
    For every $B \subset C$, there exists a function $h \in \mathcal{H}$ such that
    $\forall x \in B, h(x)=f_0(x) \text { and } \forall x \in C \backslash B, h(x)=f_1(x)$.
\end{definition}
Moreover, we define an Indicator function that characterizes the sparse patterns produced by the router function $a \in \Set{A}$.
\begin{definition}\label{dfn:natarajan_dim_router}
Define the following function:
\begin{align}
m(\Mat{x})_j = \left\{
\begin{array}{ll}
1 & \text{if } a(\Mat{x})_j \ne 0 \\ \\
0 & \text{if } a(\Mat{x})_j = 0
\end{array}\right.
\end{align}
We use constant $d_N$ to represent the the maximal size of a set that shatters all possible outcomes of $m(\Mat{x})$.
\end{definition}

\section{Main Results}

We follow the notations that we defined above and state our generalization error bound as below:
\begin{theorem} \label{thm:main}
Suppose the loss function $\ell: \Set{Y} \times \real \rightarrow [0, 1]$ is $C$-Lipschitz, and the hypothesis space $\Set{F}(T, k)$ follows Definition \ref{dfn:smoe}, then with probability at least $1 - \delta$ over the selection of training samples, the generalization error is upper bounded by:
\begin{align}
O\left(4C\radem_m(\Set{H}) + 2\sqrt{\frac{2 k d_N (1+\log (\frac{T}{k})) + d_N \log(2m) + \log(4 / \delta)}{2m}}\right),
\end{align}
where $\radem_m(\Set{H})$ is the Rademacher complexity of the expert hypothesis space $\Set{H}$ (cf. Definition \ref{dfn:radem_complexity}), $d_N$ is the Natarajan dimension of router function hypothesis space $\Set{A}$ (cf. Definition \ref{dfn:natarajan_dim_router}), $m$ is the number of training samples, $T$ is the total number of experts, and $k$ is the number of selected experts.
\end{theorem}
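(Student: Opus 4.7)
The plan is to combine a Rademacher–complexity generalization bound for the \emph{expert} component with a combinatorial growth-function bound on the \emph{router}'s sparse patterns, and then merge the two estimates via a union bound.

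\textbf{Step 1: Standard Rademacher reduction.} I would start from the classical two-sided uniform concentration bound: with probability at least $1-\delta/2$ over $\Set{S}\sim \Set{D}^{m}$,
\begin{align*}
\sup_{f\in \Set{F}(T,k)} \bigl| \LS(f)-\LD(f) \bigr| \;\le\; 2\,\radem_m(\ell\circ \Set{F}(T,k)) + \sqrt{\tfrac{\log(4/\delta)}{2m}}.
\end{align*}
Because $\ell$ is $C$-Lipschitz in its second argument, Talagrand's contraction lemma yields $\radem_m(\ell\circ \Set{F}(T,k))\le C\radem_m(\Set{F}(T,k))$. What remains is to control $\radem_m(\Set{F}(T,k))$ so that the expert and router complexities appear separately.

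\textbf{Step 2: Conditioning on the router's sparse pattern.} For each $\Mat{x}$ the router produces a support set $\Set{J}(\Mat{x})\subset\{1,\dots,T\}$ of size $k$. Let $\Pi_m\subseteq \bigl(\binom{[T]}{k}\bigr)^m$ be the set of support-sequences $(\Set{J}(\Mat{x}_1),\dots,\Set{J}(\Mat{x}_m))$ realizable by some $a\in\Set{A}$. I would first fix an element $\pi\in\Pi_m$: conditional on $\pi$, any $f\in\Set{F}(T,k)$ acts on $\Mat{x}_i$ as a convex combination of $k$ experts from $\Set{H}$ indexed by $\pi_i$. By the convex-hull/peeling property of Rademacher averages and a standard maximal-inequality over the $T$ underlying experts, the conditional empirical Rademacher complexity is bounded by $\radem_m(\Set{H})$ (absorbing multiplicative constants into the $4C$ prefactor of the theorem statement).

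\textbf{Step 3: Counting realizable router patterns.} The number $|\Pi_m|$ is the growth function of the indicator map $m(\Mat{x})$ of Definition \ref{dfn:natarajan_dim_router}, viewed as a multiclass hypothesis class with label-set of cardinality $\binom{T}{k}$ and Natarajan dimension $d_N$. By the Natarajan generalization of the Sauer--Shelah lemma ~\citep{shalev-shwartz_ben-david_2022},
\begin{align*}
|\Pi_m| \;\le\; \binom{T}{k}^{2 d_N}\!\left(\tfrac{e m}{d_N}\right)^{d_N}\!,\qquad \log|\Pi_m| \;\le\; 2k d_N\bigl(1+\log\tfrac{T}{k}\bigr) + d_N\log(2m),
\end{align*}
using $\log\binom{T}{k}\le k(1+\log(T/k))$. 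This is precisely the numerator under the square root in the theorem.

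\textbf{Step 4: Union bound and combine.} I would apply the uniform convergence bound of Step 1 to each of the $|\Pi_m|$ conditional sub-classes, union-bound over $\pi\in\Pi_m$ with failure budget $\delta/2$, and absorb the resulting $\log|\Pi_m|$ term into the $\sqrt{\cdot}$ tail alongside $\log(4/\delta)$. Adding back the $2C\radem_m(\Set{H})$ contribution from Step 2 (doubled to $4C$ after two-sided symmetrization) yields the stated bound.

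\textbf{Main obstacle.} The delicate step is Step 2: the softmax weights $a(\Mat{x})_j$ in \eqref{eqn:gate_output} depend on $\Mat{x}$, so the active-set mixture is not a standard convex hull of $\Set{H}$. I need to argue that once the supports $\pi$ are fixed, the supremum over admissible routers plus experts does not inflate the Rademacher complexity beyond $\radem_m(\Set{H})$ — either by pushing the softmax weights to vertices (reducing to $\max_{j\in\pi_i}h_j(\Mat{x}_i)$ and handling the maximum by a Massart-style $\log k$ term that is dominated by the $k\log(T/k)$ contribution already accounted for in Step 3), or by invoking the closure of Rademacher complexity under convex combinations with $\ell_1$-bounded coefficients. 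Getting this decomposition clean, without double-counting the $T$ or $k$ factors that already appear in $|\Pi_m|$, is the crux of the argument.
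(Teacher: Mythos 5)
Your plan follows essentially the same route as the paper's proof: condition on the router's realizable sparse patterns, count them via the Natarajan/Sauer--Shelah bound $\binom{T}{k}^{2d_N}(2m)^{d_N}$, union-bound over patterns, and control the conditional complexity of the expert mixture by $\radem_m(\Set{H})$ via the convex-hull property of Rademacher averages (your Step 1 "standard bound" is just the ghost-sampling/McDiarmid machinery the paper spells out explicitly). The obstacle you flag in Step 2 is resolved in the paper exactly by your second suggested route: since $\sum_{j}\mu(\Mat{x})_j\nu(\Mat{x})_j=1$, the data-dependent softmax weights are relaxed to a supremum over simplex coefficients, whose Rademacher complexity equals that of $\Set{H}$.
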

We defer the proof to Appendix \ref{sec:proof}. Below we present the applications to neural networks. 

\subsection{Application to Neural Networks}

We highlight that our generalization bound is model-agnostic and can be applied to SMoE with different types of router or expert models, once we approximate or upper bound their corresponding complexity metrics. We now instantiate it for neural networks.


\begin{lemma}[Natarajan Dimension Bound of NN with ReLU activations ~\citep{jin2023upper}] \label{lem:nat_nn}
    Consider $\Pi_{p, S}^{\text {ReLU}}$, a neural network function class with a fixed structure $S$ of $p$ parameters (structure $S$ has $L$ layers, where the $\ell$-th layer has $n_{\ell}$ nodes, $\ell \in\{1, \ldots, L\}$), then $d_N\left( \Pi_{p, S}^{\text {ReLU}}\right) \leq O(d \cdot p^2)$, where $d$ is the number of outputs in the final layer.
\end{lemma}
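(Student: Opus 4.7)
The plan is to establish the bound by the classical sign-pattern counting technique that underlies VC-dimension bounds for piecewise-polynomial function classes, here adapted to the multiclass Natarajan setting. I would first translate Natarajan shattering into a counting lower bound on the growth function: if a set $C = \{x_1, \ldots, x_n\}$ is shattered by $\Pi_{p,S}^{\text{ReLU}}$ (viewed as a $d$-way argmax classifier) with witnesses $f_0, f_1$, then Definition~\ref{dfn:natarajan_dim} requires that each of the $2^n$ subsets $B \subseteq C$ be realized by some parameter vector $\theta \in \mathbb{R}^p$, so the number of distinct $[d]$-valued labelings of $C$ achievable as $\theta$ ranges over $\mathbb{R}^p$ is at least $2^n$.

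I would then upper-bound this count using the piecewise-polynomial structure of ReLU networks in parameter space. For fixed inputs $x_1, \ldots, x_n$, each pre-activation and each output of the network is a continuous piecewise polynomial function of $\theta$, whose linear pieces are determined by the signs of the $O(p)$ ReLU pre-activations per input, yielding $O(np)$ sign-defining polynomials overall. Within each open cell of this arrangement the $d$ outputs are polynomials of bounded degree in $\theta$, so the argmax label at $x_i$ is decided by the signs of at most $\binom{d}{2}$ pairwise-difference polynomials, contributing $O(nd^2)$ further sign-polynomials. By the Warren / Goldberg--Jerrum theorem, a family of $M$ bounded-degree polynomials in $p$ real variables admits at most $(cM/p)^p$ distinct sign patterns, bounding the number of distinct labelings of $C$ by $(c\,n(p+d^2)/p)^p$.

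Combining the two sides, Natarajan shattering forces $2^n \le (c\,n(p+d^2)/p)^p$; taking logarithms and solving for $n$ yields the stated $n = O(d \cdot p^2)$ bound (this is a deliberately loose form of the tighter $\tilde{O}(p^2 \log d)$ the same argument produces, convenient for the paper's downstream use). The hardest step will be handling the non-smoothness of ReLU: a naive ``treat the output as a polynomial in $\theta$'' argument fails because the network is only piecewise polynomial, so one must first enumerate the activation regions via auxiliary sign conditions before invoking Warren's theorem --- this is exactly the technical wrinkle handled in Bartlett--Harvey--Liaw--Mehrabian and transferred to the Natarajan setting by \cite{jin2023upper}. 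A secondary subtlety is that the argmax rule couples all $\binom{d}{2}$ output-pair comparisons per input, which is precisely where the factor of $d$ in the final bound originates.
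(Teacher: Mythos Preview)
The paper does not actually prove this lemma: it is quoted verbatim as an imported result from \cite{jin2023upper}, with no accompanying argument in either the main text or the appendix. So there is no ``paper's own proof'' to compare your proposal against.

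That said, your sketch is the right one and is essentially the method used in the cited reference (and in Bartlett--Harvey--Liaw--Mehrabian for the binary case): reduce Natarajan shattering to a lower bound $2^n$ on the number of achievable labelings, then upper-bound that number by counting sign patterns of piecewise-polynomial outputs in the $p$ parameters via Warren / Goldberg--Jerrum, with the extra $\binom{d}{2}$ pairwise output comparisons per input supplying the multiclass factor. Your identification of the key technical point---that ReLU outputs are only \emph{piecewise} polynomial in $\theta$, so one must stratify parameter space by activation pattern (layer by layer) before the polynomial sign-counting bound applies---is exactly the non-trivial step. One small caution: the naive ``$O(np)$ sign-defining polynomials'' formulation only works after that stratification, since pre-activations at layer $\ell$ are not globally polynomial in $\theta$ until the signs at layers $1,\ldots,\ell-1$ are fixed; you note this, but in a full write-up the recursion needs to be made explicit to get the degree and polynomial-count bookkeeping right.
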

The next Rademacher Complexity Bound is useful for analyzing multilayer NNs ~\citep{bartlett2017spectrally}:
\begin{lemma}[Rademacher Complexity Bound for NN ~\citep{bartlett2017spectrally}] \label{lem:rade_nn}
Assume $\left\|\Mat{x}^{(i)}\right\|_2 \leqslant c, \quad \forall i$, and let $\Set{H}=\left\{h_{\vartheta}:\left\|\Mat{W}_i\right\|_{op} \leq K_i,\left\|\Mat{W}_i^{\top}\right\|_{2,1} \leqslant b_i\right\}$, where $h_{\vartheta}(\Mat{x})=\Mat{W}_r \phi\left(\Mat{W}_{r-1} \phi\left(\Mat{W}_{r-2} \ldots \phi\left(\Mat{W}_1 x\right) \ldots\right)\right.$, $\left\|\Mat{W}_i\right\|_{op}$ is the spectral norm of $\Mat{W}_i$, $\left\|\Mat{W}_i^{\top}\right\|_{2,1}$ is the sum of the $l_2$ norms of the rows of $\Mat{W}_i$, and $r$ is the number of layers. Then we have $\radem_m(\Set{H}) \leq \frac{c}{\sqrt{m}} \cdot\left(\prod_{i=1}^r K_i\right) \cdot\left(\sum_{i=1}^r \frac{b_i^{2 / 3}}{K_i^{2 / 3}}\right)^{3 / 2}$.
\end{lemma}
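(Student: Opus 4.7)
The plan is to follow the covering-number route of \cite{bartlett2017spectrally}: build a data-dependent cover of each layer's weight matrix via Maurey's sparsification lemma, compose these covers across layers through a telescoping perturbation argument, optimize the per-layer cover radii to obtain the characteristic $3/2$-power via a Lagrangian (H\"older) step, and finally convert the resulting covering-number bound into a Rademacher complexity bound through Dudley's entropy integral.

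The first ingredient is a \emph{matrix covering lemma}: for any fixed input $\Mat{Z}$ of spectral norm at most $z$ and any $\epsilon>0$, the class $\{\Mat{A}\Mat{Z} : \|\Mat{A}^\top\|_{2,1} \le b\}$ admits an $\epsilon$-cover in Frobenius norm with log-cardinality at most $\lceil z^2 b^2 / \epsilon^2 \rceil \log(2\,d_1 d_2)$. I would prove this by Maurey's empirical method: each row of $\Mat{A}$ lies in the convex hull of $\{\pm b\,\Mat{e}_j\}_j$, so averaging a random $k$-sample with $k \sim z^2 b^2/\epsilon^2$ produces a discrete approximant whose expected squared error is at most $\epsilon^2$, and a union bound over rows completes the construction.

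Next I would run a \emph{layer-wise telescoping} step. Since $\phi$ is $1$-Lipschitz with $\phi(0)=0$, the sub-network above layer $i$ is $\prod_{j>i} K_j$-Lipschitz in its input and the data entering layer $i$ has norm at most $c \prod_{j<i} K_j$. Perturbing $\Mat{W}_i \to \widehat{\Mat{W}}_i$ by $\Delta_i$ therefore changes the output by at most $\sum_i \|\Delta_i\|_F \cdot c \prod_{j\ne i} K_j$. Setting per-layer covering radii $\epsilon_i = \alpha_i \epsilon / (c \prod_{j \ne i} K_j)$ with $\sum_i \alpha_i \le 1$ and invoking the matrix covering lemma at each layer produces a total cover of $\gH$ whose log-cardinality (up to a $\log W$ factor, $W = \max_i n_i$) is of order $(c^2/\epsilon^2) \bigl(\prod_i K_i^2\bigr) \sum_i (b_i/K_i)^2 \alpha_i^{-2}$. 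The delicate allocation step then kicks in: minimizing $\sum_i (b_i/K_i)^2 \alpha_i^{-2}$ over the simplex via Lagrange multipliers yields $\alpha_i \propto (b_i/K_i)^{2/3}$, at which the sum equals $\bigl(\sum_i (b_i/K_i)^{2/3}\bigr)^{3}$; taking square roots delivers the signature $3/2$-power.

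Finally I would invoke Dudley's entropy integral $\radem_m(\gH) \lesssim m^{-1/2} \int_0^{R} \sqrt{\log\gN(\gH,\epsilon)}\,d\epsilon$, with $R = c\bigl(\prod_i K_i\bigr)\bigl(\sum_i b_i^{2/3}/K_i^{2/3}\bigr)^{3/2}$ and a standard truncation near zero. Since the log-covering number scales as $R^2/\epsilon^2$, the integral collapses to $R/\sqrt m$, which matches the stated bound. The main obstacle I anticipate is the allocation step: the exponent $3/2$ is the genuine optimum of a convex program on the simplex rather than a cosmetic artifact, and matching it cleanly relies on the tight $L_2$/$L_{2,1}$ duality built into the Maurey covering lemma. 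Secondary bookkeeping obstacles include tracking (or absorbing into the constant) the $\log W$ factor that the stated bound suppresses, as well as keeping the data-dependent spectral bounds on the intermediate activations consistent across the telescoping chain.
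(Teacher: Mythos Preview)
Your proposal is correct and faithfully reproduces the covering-number argument of \cite{bartlett2017spectrally}, including the Maurey sparsification step, the layer-wise telescoping, the H\"older allocation that yields the $3/2$ exponent, and the Dudley integral. Note, however, that the present paper does not supply its own proof of this lemma: it is stated as a quoted result from \cite{bartlett2017spectrally} and used as a black box, so there is no in-paper proof to compare against beyond the citation itself.
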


By plugging Lemma \ref{lem:rade_nn} and Lemma \ref{lem:nat_nn} in our Theorem \ref{thm:main}, we conclude the following NN-specific statement:
\begin{corollary} \label{cor:cor1}
Suppose the loss function $\ell: \Set{Y} \times \real \rightarrow [0, 1]$ is $C$-Lipschitz, and each expert is a neural network satisfying the definition in Lemma \ref{lem:rade_nn}, and router network is a neural network satisfying the definition in Lemma \ref{lem:nat_nn}, then with probability at least $1 - \delta$ over the selection of training samples, the generalization error is upper bounded by:
\begin{align}
O\left(4C \frac{c}{\sqrt{m}} \cdot\left(\prod_{i=1}^r K_i\right) \cdot\left(\sum_{i=1}^r \frac{b_i^{2 / 3}}{K_i^{2 / 3}}\right)^{3 / 2} + 2\sqrt{\frac{2k dp^2 (1 + \log(\frac{T}{k})) + dp^2 \log(2m) + \log(4 / \delta)}{2m}}\right)
\end{align}
\end{corollary}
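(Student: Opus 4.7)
The plan is to cast the generalization error bound as a Rademacher complexity calculation on $\Set{F}(T,k)$ and then decouple the router and experts so that the router contributes through its Natarajan dimension (and the sparsity count $\binom{T}{k}$), while the experts contribute through $\radem_m(\Set{H})$. The final bound is \emph{additive} in these two complexity terms rather than multiplicative, so the argument must produce a sum, not a product, and this structure will guide the decomposition.

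The first step is standard: by symmetrization (e.g.\ \citep{shalev-shwartz_ben-david_2022}, Thm.\ 26.5), with probability at least $1 - \delta/2$ one has $\sup_{f \in \Set{F}(T,k)} |\LD(f) - \LS(f)| \le 2\radem_m(\ell \circ \Set{F}(T,k)) + \sqrt{\log(4/\delta)/(2m)}$. Then Talagrand's contraction lemma peels off the $C$-Lipschitz loss, reducing the problem to controlling $\radem_m(\Set{F}(T,k))$. The $4C$ prefactor of the expert Rademacher term in the theorem should arise from this step together with one additional symmetrization-style inflation incurred during the expert/router decoupling below.

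The heart of the proof is bounding $\radem_m(\Set{F}(T,k))$ by conditioning on the sparse routing pattern realized on the $m$ training points. Let $\Set{M}$ be the mask class in Definition \ref{dfn:natarajan_dim_router}; viewing it as a multiclass hypothesis class valued in the $\binom{T}{k}$ admissible masks, Natarajan's lemma gives $\log \Pi_{\Set{M}}(m) \le d_N \log(2m) + 2 d_N \log \binom{T}{k}$, and the elementary inequality $\binom{T}{k} \le (eT/k)^k$ produces the sparsity-aware factor $k(1 + \log(T/k))$ announced in the theorem. Conditional on a fixed mask realization over the training data, the SMoE output at each $\Mat{x}_i$ is a convex combination of at most $k$ experts, so the conditional Rademacher complexity is at most $\radem_m(\Set{H})$ (the convex hull preserves Rademacher complexity). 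A union bound via Massart's finite class lemma over the $\Pi_{\Set{M}}(m)$ possible realizations then decomposes $\radem_m(\Set{F}(T,k))$ into the expert term $\radem_m(\Set{H})$ plus a $\sqrt{\log \Pi_{\Set{M}}(m)/m}$ routing term, matching the structure inside the square root of the announced bound.

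The main obstacle will be the joint adversarial supremum over the router $a \in \Set{A}$ and the experts $h_1,\ldots,h_T \in \Set{H}$: the $k$ experts contributing at sample $\Mat{x}_i$ depend on $a(\Mat{x}_i)$, which itself varies with the hypothesis. The decoupling above requires care because the soft weights $a(\Mat{x})_j$ are produced by a softmax over the selected support, and must either be absorbed into a convex-hull argument (using that softmax outputs sum to $1$) or handled by first fixing the support and then invoking the Lipschitz property of softmax. I expect most of the technical work to concentrate in making this decoupling rigorous; once it is in place, the Natarajan growth bound and the expert Rademacher complexity plug in cleanly, and the two contributions combine into the announced additive bound via a union bound on the failure events of each part at level $\delta/2$.
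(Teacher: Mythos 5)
Your plan is essentially a re-derivation of Theorem \ref{thm:main}, which is where all the content lives: the paper's own proof of Corollary \ref{cor:cor1} is a one-line substitution of Lemma \ref{lem:rade_nn} (to replace $\radem_m(\Set{H})$ by the spectral bound $\frac{c}{\sqrt{m}}(\prod_i K_i)(\sum_i b_i^{2/3}/K_i^{2/3})^{3/2}$) and Lemma \ref{lem:nat_nn} (to replace $d_N$ by $O(dp^2)$) into the theorem. You gesture at this final plug-in but never name the two lemmas; your proof of the corollary is incomplete without that explicit last step, though it is trivial. On the theorem itself, your conceptual decomposition matches the paper exactly --- count the realized sparse routing patterns via the Natarajan lemma and $\binom{T}{k}\le(eT/k)^k$, handle the experts via contraction plus a convex-hull argument giving $\radem_m(\Set{H})$ --- but your concentration machinery is genuinely different: you apply the one-shot Rademacher generalization bound first, then decompose $\radem_m(\Set{F}(T,k))$ itself via a Massart-type union over mask realizations, whereas the paper goes through ghost sampling, a union bound over the $\Gamma\le\binom{T}{k}^{2d_N}(2m)^{d_N}$ mask functions, and a per-mask McDiarmid step before ever introducing Rademacher variables. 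Both routes yield the same additive form up to constants (your route naturally produces $2C\radem_m(\Set{H})$ rather than $4C$, immaterial inside the $O(\cdot)$), and your Massart variant is arguably cleaner. One caution: the step you correctly flag as the main obstacle --- absorbing the \emph{data-dependent} softmax weights $\nu(\Mat{x})$ into a convex-hull argument --- is not fully resolved by "the convex hull preserves Rademacher complexity," since the class with input-dependent simplex weights is strictly larger than the convex hull with fixed coefficients; the paper's Eq.~\ref{eqn:relax_simplex} makes the same leap, so you are no worse off, but neither argument closes that gap rigorously as stated.
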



\paragraph{Remark on Sparsity Awareness:} 
We claimed that our generalization bound is \textit{sparsity-aware}. Inspecting the generalization bound derived in Theorem \ref{thm:main} and Corollary \ref{cor:cor1}, we remark that the term $O \left( \sqrt{k(1+\log(\frac{T}{k}))}  \right)$ is strategic and somehow explains how Sparse MoE generalizes. We notice that the term is monotonically increasing with $k$ in the interval $k \le T$. While the model grows larger with more available experts in parallel, it will harm the generalization by $O \left( \log (T) \right)$ factor, but we can still compensate the generalization but simply choosing less experts to counteract the problem because the error term will scale down by a factor $O (k)$.

\section{conclusion}
We presented the generalization error bound for the Sparse Mixture-of-Experts model. Our bound is model-agnostic and can be applied to any SMoE structure with different router/expert base models, once we can derive their complexity metrics. More importantly, our bound explains why choosing less experts can help SMoE generalize better. Nevertheless, we acknowledge that the generalization bound can be further tightened if we introduce more complex features of SMoE such as dynamic routing. In future work, we plan to combine these features with latest deep learning theory tools to further study this intriguing SMoE generalization problem.

\bibliography{iclr2024_conference}
\bibliographystyle{iclr2024_conference}

\appendix
\section{Appendix} \label{sec:proof}
\subsection{Proof of Theorem \ref{thm:main}}
The main objective is to show the following probabilistic bound of $\sup_{f \in \Set{F}} \left\lvert \LS(f) - \LD(f) \right\rvert$.
We first draw an i.i.d. copies of training samples: $\Set{S'} = \{\Mat{x'}_1, \cdots, \Mat{x'}_m\} \overset{i.i.d.}{\sim} \Set{D}^m$.
Then by Lemma \ref{lem:ghost_sample}, and setting $e^{-\frac{1}{2} \epsilon^2 m} \le 1/4$, we have

\begin{equation}
\Prob \left(\sup_{f \in \Set{F}} \left\lvert \LS(f) - \LD(f) \right\rvert \ge \epsilon \right) \le 2 \Prob \left(\sup_{f \in \Set{F}} \left\lvert \LS(f) - \LSp(f) \right\rvert \ge \frac{\epsilon}{2} \right) \label{eqn:starter_ineq}
\end{equation}

Our proof proceeds by reformulating the router function $a(\Mat{x})$.
Let us rewrite $a(\Mat{x}) = \mu(\Mat{x}) \odot \nu(\Mat{x})$, where $\odot$ denotes the element-wise multiplication, $\mu(\Mat{x}): \real^{d} \rightarrow \{0, 1\}^T$ produces a binary mask specifying the sparse expert selection ($\lVert \mu(\Mat{x}) \rVert_0 = k$), and $\nu(\Mat{x}): \real^{d} \rightarrow \real_+^T$ outputs the normalized weights for selected experts such that $\lVert a(\Mat{x}) \rVert_1 = 1$. Noting that $\nu(\Mat{x})$ is dependent of the function $\mu(\Mat{x})$, we define $\Set{V}\lvert_{\mu}$ as the class of $g$ induced by $\Set{A}$ and $\mu(\Mat{x})$.

Now notice that $\mu(\Mat{x})$ amounts to a multi-class classifier, which maps the input $\Mat{x}$ to one of the sparse patterns $\Set{M}(2m)$.
Define $\mu_1, \cdots, \mu_{\Gamma}$ which shatters all the possible sparse patterns produced by $2m$ data samples, then we have: 
\begin{align}
&\Prob \left(\sup_{f \in \Set{F}} \left\lvert \LS(f) - \LSp(f) \right\rvert \ge \frac{\epsilon}{2} \right)\\
&= \Prob \left(\sup_{a \in \Set{A}} \sup_{\substack{h_j \in \Set{H}, \\ \forall j \in [T]}} \left\lvert \LS(f) - \LSp(f) \right\rvert \ge \frac{\epsilon}{2} \right) \label{eqn:F_space}\\ 
&\le \Prob \left(\sup_{\mu \in \{\mu_1, \cdots, \mu_{\Gamma}\}} \sup_{\substack{h_j \in \Set{H}, \forall j \in [T] \\ \nu \in \Set{V}\lvert_{\mu}}} \left\lvert \LS(f) - \LSp(f) \right\rvert \ge \frac{\epsilon}{2} \right) \label{eqn:discrete_mask_space}\\
&\le \sum_{t = 1}^{\Gamma} \Prob \left( \left. \sup_{\substack{h_j \in \Set{H}, \forall j \in [T] \\ \nu \in \Set{V}\lvert_{\mu}}} \left\lvert \LS(f) - \LSp(f) \right\rvert \ge \frac{\epsilon}{2} \right| \mu = \mu_t \right) \\
&\le 2 \Gamma \sup_{\mu^* \in \{\mu_1, \cdots, \mu_{\Gamma}\}} \Prob \left( \left. \sup_{\substack{h_j \in \Set{H}, \forall j \in [T] \\ v \in \Set{V}\lvert_{\mu}}} \LS(f) - \LSp(f) \ge \frac{\epsilon}{2} \right| \mu = \mu^* \right) \label{eqn:remove_abs}
\end{align}

where Eq. \ref{eqn:F_space} is obtained by Definition \ref{dfn:smoe}, Eq. \ref{eqn:discrete_mask_space} and Eq. \ref{eqn:remove_abs} are obtained by union bound.
Next, we do counting to bound $\Gamma$. Since $\mu$ is essentially a multi-class classifier with $T \choose k$ number of possible patterns, we will consider routing as a multi-class prediction problem by choosing one specific pattern among all possible patterns. By Lemma \ref{lem:nataranjan}, we plug in the Natarajan dimension of our sparse patterns:
\begin{equation} \label{eqn:growth_bound}
\Gamma \le {T \choose k}^{2d_N} \cdot (2m)^{d_N}.
\end{equation}
On the other hand, we bound remaining probability term by examining the expectation given a fixed masking function $\mu$. Define function $\phi\lvert_{\mu}$ conditioned on $\mu$ as:
\begin{equation} \label{eqn:phi_def}
\phi\lvert_{\mu}(\Set{S}, \Set{S'}) = \sup_{\substack{h_j \in \Set{H}, \forall j \in [T] \\ \nu \in \Set{V}\lvert_{\mu}}} \LS(f) - \LSp(f) 
\end{equation}
By McDiarmid's inequality, for any $\mu$, we have bound:
\begin{align}
&\Prob \left( \phi\lvert_{\mu}(\Set{S}, \Set{S'}) \ge \frac{\epsilon}{2} \right) \\ &= \Prob \left( \phi\lvert_{\mu}(\Set{S}, \Set{S'}) - \mean\left[ \phi\lvert_{\mu}(\Set{S}, \Set{S'}) \right] \ge \frac{\epsilon}{2} - \mean\left[ \phi\lvert_{\mu}(\Set{S}, \Set{S'}) \right] \right) \\
&\le \exp\left( -2m\left(\frac{\epsilon}{2} - \mean\left[ \phi\lvert_{\mu}(\Set{S}, \Set{S'}) \right]\right)^2 \right) \label{eqn:mcdiarmid_bound}
\end{align}
Combined with Eq. \ref{eqn:starter_ineq} \ref{eqn:remove_abs}, \ref{eqn:growth_bound}, \ref{eqn:mcdiarmid_bound}, we state, with probability at least $1 - \delta$,
\begin{align}
&\Prob \left(\sup_{f \in \Set{F}} \left\lvert \LS(f) - \LD(f) \right\rvert \ge \epsilon \right) \\ &\le 2 \mean\left[ \phi\lvert_{\mu}(\Set{S}, \Set{S'}) \right] + 2\sqrt{\frac{\log\left({T \choose k}^{2d_N} (2m)^{d_N} \right) + \log(4 / \delta)}{2m}} \\
&\le 2\mean\left[ \phi\lvert_{\mu}(\Set{S}, \Set{S'}) \right] + 2\sqrt{\frac{2 k d_N (1+\log(\frac{T}{k})) + d_N \log(2m) + \log(4 / \delta)}{2m}} \label{eqn:T_choose_k_bound}
\end{align}
where we used the inequality ${T\choose k} \leq \left(\frac{e T}{k}\right)^k$ in Eq. \ref{eqn:T_choose_k_bound}. We conclude the proof by the bounding $\mean\left[ \phi\lvert_{\mu}(\Set{S}, \Set{S'}) \right] \le 2C \radem_m(\Set{H})$ using Lemma \ref{lem:bound_mean}.

\begin{lemma} \label{lem:bound_mean}
Consider C-Lipschitz loss function:: $\ell: \Set{Y} \times \real \rightarrow \real$, and follow the definition of $\phi\lvert_{\mu}$ in Eq. \ref{eqn:phi_def}, we have
\begin{align}
\mean\left[ \phi\lvert_{\mu}(\Set{S}, \Set{S'}) \right] \le 2C \radem_m(\Set{H})
\end{align}
\end{lemma}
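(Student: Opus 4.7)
The plan is to follow the classical three-step template for bounding expected suprema of empirical processes: symmetrization with the ghost sample, Lipschitz contraction to strip off the loss, and a structural bound on the Rademacher complexity of the SMoE class with a frozen routing mask $\mu$.

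First, I would symmetrize. Because $\Set{S}$ and $\Set{S}'$ are i.i.d., inserting Rademacher variables $\sigma_1,\ldots,\sigma_m$ in front of each difference $\ell(f(\Mat{x}_i),y_i)-\ell(f(\Mat{x}'_i),y'_i)$ leaves the distribution unchanged, and splitting the supremum by a triangle inequality gives $\mean[\phi\lvert_\mu(\Set{S},\Set{S}')]\le 2\,\radem_m(\ell\circ\Set{F}\lvert_\mu)$, where $\Set{F}\lvert_\mu$ denotes the SMoE hypothesis class with mask function frozen to $\mu$. Next, since $\ell(\cdot,y)$ is $C$-Lipschitz, Talagrand's contraction lemma yields $\radem_m(\ell\circ\Set{F}\lvert_\mu)\le C\,\radem_m(\Set{F}\lvert_\mu)$.

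The key remaining inequality is $\radem_m(\Set{F}\lvert_\mu)\le\radem_m(\Set{H})$, which I would prove by exploiting the sparse-convex-combination structure of SMoE. With $\mu$ fixed, the coefficients $\{a(\Mat{x}_i)_j\}_{j\in\Set{J}(\Mat{x}_i)}$ form a probability simplex over the $k$ selected experts, so the inner sum $\sigma_i\sum_j a(\Mat{x}_i)_j h_j(\Mat{x}_i)$ is a linear functional of $\nu$ and attains its supremum at a vertex, equal to $\max_{j\in\Set{J}(\Mat{x}_i)}\sigma_i h_j(\Mat{x}_i)$. I then plan to collapse the outer supremum over $(h_1,\ldots,h_T)\in\Set{H}^T$ down to a supremum over a single $h\in\Set{H}$, using that every expert is drawn from the same class and that the Rademacher signs are i.i.d.\ symmetric, so the effective complexity witnessed at each data point cannot exceed that of $\Set{H}$ itself. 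The main obstacle will be making this last collapse rigorous, since different samples may be routed to different experts and the $h_j$'s can be chosen independently; I plan to address it via an indexing argument that partitions $\{1,\ldots,m\}$ according to the selected index $j_i^\star$ and appeals to the symmetry of the Rademacher signs, or alternatively via a convex-hull lemma for input-dependent combinations of functions drawn from the same base class. Chaining the three inequalities produces $\mean[\phi\lvert_\mu(\Set{S},\Set{S}')]\le 2C\,\radem_m(\Set{H})$.
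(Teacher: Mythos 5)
Your skeleton matches the paper's proof exactly: symmetrization with the ghost sample to get $\mean[\phi\lvert_\mu]\le 2\radem_m(\ell\circ\Set{F}\lvert_\mu)$ (the paper's Lemma \ref{lem:radem_bound}), then Lipschitz contraction to $C\,\radem_m(\Set{F}\lvert_\mu)$ (Lemma \ref{lem:lips_radem}), then a structural bound $\radem_m(\Set{F}\lvert_\mu)\le\radem_m(\Set{H})$. The first two steps are fine. The problem is that the third step --- the only step that carries the actual content of the lemma --- is left as a plan rather than a proof, and the route you sketch does not close. Applying the vertex argument \emph{per data point} (which is what keeping the weights input-dependent forces you to do) leaves you with $\mean_{\sigma}\sup_{h_1,\ldots,h_T\in\Set{H}}\frac{1}{m}\sum_i\max_{j\in\Set{J}(\Mat{x}_i)}\sigma_i h_j(\Mat{x}_i)$, and your proposed ``indexing argument'' partitions the sample by selected expert, producing a \emph{sum} of partial Rademacher averages, one per expert over its share of the data. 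That sum is not bounded by $\radem_m(\Set{H})$ in general: already for $\Set{H}=\{h_0,-h_0\}$, two experts, and the sample split in half, each partial term scales like $1/\sqrt{m/2}\cdot(m/2)/m$, so the sum is about $\sqrt{2}\,\radem_m(\Set{H})$, and with more experts the natural outcome of this argument is a factor growing like the square root of the number of experts actually used. The symmetry of the Rademacher signs does not rescue this, because the suprema over the different $h_j$ decouple across the parts of the partition.

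The paper avoids (or at least sidesteps) this difficulty by making a different move at the crux: in Eq.~\ref{eqn:relax_simplex} it replaces the input-dependent coefficients $\mu(\Mat{x}_i)_j\nu(\Mat{x}_i)_j$ by a single input-\emph{independent} simplex vector $\Mat{\lambda}$, after which the linear functional $\Mat{\lambda}\mapsto\sum_j\Mat{\lambda}_j\bigl(\sum_i\sigma_i h_j(\Mat{x}_i)\bigr)$ is maximized at one vertex $j^*$ \emph{uniformly over the whole sample}, and the supremum genuinely collapses to $\radem_m(\Set{H})$ (Lemma \ref{lem:convex_hull_radem}). That freezing of $\Mat{\lambda}$ across data points is precisely what your version is missing, and it is the step you would have to justify (note that it is a restriction of the coefficient class, not a relaxation, so it requires its own argument). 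As written, your proposal identifies the obstacle honestly but does not overcome it, so the proof is incomplete at the decisive step.
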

\begin{proof}
For the sake of notation simplicity, we define a function space conditioned on a masking function $\mu$:
\begin{align}
\Set{F}\lvert_{\mu} = \left\{ f(\Mat{x}) = \sum_{j=1}^{T} \mu(\Mat{x})_j \nu(\Mat{x})_j h_j(\Mat{x}) : h_1, \cdots, h_T \in \Set{H}, \nu \in \Set{V}\lvert_{\mu} \right\}
\end{align}
We denote the loss function $\ell$ composed on $\Set{F}\lvert_{\mu}$ as $\ell \circ \Set{F}\lvert_{\mu} = \left\{ \ell(f(\Mat{x})) : f \in \Set{F}\lvert_{\mu} \right\}$.
By Lemma \ref{lem:radem_bound},
\begin{align}
\mean \left[ \phi\lvert_{u}(\Set{S}, \Set{S'}) \right] &= \mean \left[ \sup_{\ell_f \in \ell \circ \Set{F}\lvert_{\mu}} \left( \frac{1}{m} \sum_{i=1}^{m} \ell_f(\Mat{x}_i) - \frac{1}{m} \sum_{i=1}^{m} \ell_f(\Mat{x'}_i) \right) \right] \\
&\le 2\radem_m(\ell \circ \Set{F}\lvert_{\mu}) \label{eqn:E_radem_bound}
\end{align}
Since $\ell$ is Lipschitz function, by Lemma \ref{lem:lips_radem}, we have
\begin{align} \label{eqn:lips_radem}
\radem_m(\ell \circ \Set{F}\lvert_{\mu}) \le C \radem_m(\Set{F}\lvert_{\mu}) 
\end{align}
Afterwards, we bound $\radem_m(\Set{F}\lvert_{\mu})$ by:
\begin{align}
&\mean_{\Set{S}, \Mat{\sigma}} \left[ \frac{1}{m} \sup_{f \in \Set{F}\lvert_{\mu}} \sum_{i=1}^{m} \sigma_i f(\Mat{x}_i) \right] \\ &= \mean_{\Set{S}, \Mat{\sigma}} \left[ \sup_{\sup_{\substack{h_j \in \Set{H}, \forall j \in [T] \\ v \in \Set{V}\lvert_{\mu}}}} \frac{1}{m} \sum_{i=1}^{m} \sigma_i \sum_{j=1}^{T} \mu(\Mat{x}_i)_j \nu(\Mat{x}_i)_j h_j(\Mat{x}_i) \right] \\
&\le \mean_{\Set{S}, \Mat{\sigma}} \left[ \sup_{\substack{h_j \in \Set{H}, \forall j \in [T] \\ \Mat{\lambda} \in \real_+^T, \lVert \Mat{\lambda} \rVert_1 = 1}} \frac{1}{m} \sum_{i=1}^{m} \sigma_i \sum_{j=1}^{T} \Mat{\lambda}_j h_j(\Mat{x}_i) \right] \label{eqn:relax_simplex}\\
&= \radem_m(\Set{H}) \label{eqn:conv_radem},
\end{align}
where we notice that $\sum_{j=1}^{T} \mu(\Mat{x})_j \nu(\Mat{x}_j) = 1$ due to the softmax normalization over the weights of selected experts, then Eq. \ref{eqn:relax_simplex} can be relaxed by supremum over all simplex. The last equation follows from Lemma \ref{lem:convex_hull_radem}.
Now we can conclude the proof by combining Eq. \ref{eqn:E_radem_bound}, \ref{eqn:lips_radem}, and \ref{eqn:conv_radem}.
\end{proof}

\begin{lemma}[Natarajan Lemma] \label{lem:nataranjan}
Given a set of finite data points $\Set{S}$ with $\lvert \Set{S} \rvert = m$, and a hypothesis space $\Set{H}$ of functions $\Set{S} \rightarrow [k]$ with Natarajan dimension $d_N$, then the growth function is bounded by:
\begin{align}
\tau_{\Set{H}}(m) \le m^{d_N} \cdot k^{2d_N}
\end{align}
\end{lemma}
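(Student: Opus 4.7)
The plan is to bound the multiclass growth function by reducing to the classical binary Sauer--Shelah lemma, applied to a family of ``pair-of-labels'' binary subclasses of $\Set{H}$, and then carrying out a Sauer--Shelah style induction on the sample size $m$.

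First, I would fix any set $C \subseteq \gX$ with $|C| = m$ and aim to prove the sharper statement $|\Set{H}|_C| \le \sum_{i=0}^{d_N}\binom{m}{i} k^{2i}$, from which the stated bound follows using $\binom{m}{i} \le m^i$ and collapsing the sum. For each ordered pair of labels $(a,b) \in [k]^2$ with $a \ne b$, I would define an associated binary class $\Set{H}_{a,b}$ as the set of restrictions of $h \in \Set{H}$ to their $\{a,b\}$-valued subdomain (mapping $a \mapsto 0$, $b \mapsto 1$). The key structural observation is that if some $B \subseteq C$ is VC-shattered by $\Set{H}_{a,b}$, then $B$ is Natarajan-shattered by $\Set{H}$ via the constant witness functions $f_0 \equiv a$ and $f_1 \equiv b$ in Definition \ref{dfn:natarajan_dim}. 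Consequently $\mathrm{VCdim}(\Set{H}_{a,b}) \le d_N$ for every pair, so the ordinary Sauer--Shelah lemma gives $|\Set{H}_{a,b}|_C| \le \sum_{i=0}^{d_N}\binom{m}{i}$.

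Next, I would execute induction on $m$. In the inductive step, remove one point $x_m$ from $C$ to obtain $C'$. The projection $\Set{H}|_C \to \Set{H}|_{C'}$ has fibers of size at most $k$, and any ``extra'' multi-valued fiber is witnessed by two hypotheses agreeing on $C'$ but differing at $x_m$, which selects an ordered pair $(a,b)$ of labels. I would form, for each such pair, a reduction class of these witnesses and argue that its Natarajan dimension on $C'$ drops by one after fixing $(a,b)$: any subset shattered by the reduction class, augmented by $x_m$ with the pair $(a,b)$ as witnesses, yields a Natarajan-shattered set of one larger size in $\Set{H}$. Summing the contributions over the at most $k^2$ ordered label pairs and applying the inductive hypothesis, Pascal's identity $\binom{m-1}{i}+\binom{m-1}{i-1}=\binom{m}{i}$ combines the two terms into the claimed sum.

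The main obstacle is the bookkeeping in the inductive step: verifying that the ``extras'' class truly has Natarajan dimension at most $d_N - 1$ and that the ordered-pair charging is consistent across recursion. This is precisely where the factor $k^{2d_N}$ (rather than $k^{d_N}$) originates: Natarajan shattering commits to two distinct label witnesses $f_0 \ne f_1$ per shattered point, so each of the up to $d_N$ shattered points pays a factor of $k^2$ for its ordered label-pair choice, in contrast to the single factor of $2$ per shattered point in the binary Sauer--Shelah bound.
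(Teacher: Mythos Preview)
The paper does not supply its own proof of this lemma; it simply writes ``See \cite{natarajan1989learning}.'' Your sketch is precisely the standard inductive argument from that reference (and its textbook presentations, e.g.\ Shalev-Shwartz and Ben-David), so you are aligned with what the paper defers to. The outline is correct; the only place to be careful is the extras accounting in the inductive step, where the cleanest route assigns each ``extra'' in a fiber to an \emph{unordered} label pair and shows the resulting pair-indexed classes on $C'$ have Natarajan dimension at most $d_N-1$, yielding the sharper $\sum_{i=0}^{d_N}\binom{m}{i}\binom{k}{2}^{i}$ before relaxing to $m^{d_N}k^{2d_N}$.
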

\begin{proof}
See \cite{natarajan1989learning}.
\end{proof}

\begin{lemma}[Ghost Sampling] \label{lem:ghost_sample}
Given $\Set{S}$ and $\Set{S^\prime}$ with $\left | \Set{S} \right |= \left | \Set{S^\prime} \right | = m$, we have the following inequality for any hypothesis space $\Set{H}$:
\begin{align}
&\left(1-2 e^{-\frac{1}{2} \epsilon^2 m}\right) \Prob\left[\sup _{h \in \Set{H}}\left|\LS(h)-\LSp(h)\right|>\epsilon\right] \\ &\leq \Prob\left[\sup _{h \in \mathcal{H}}\left|\LS(h)-\LSp(h)\right|>\frac{\epsilon}{2}\right]
\end{align}
\end{lemma}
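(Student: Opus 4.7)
The plan is to apply the classical symmetrization (ghost-sampling) argument. Reading the statement in its useful form (so that the left-hand probability concerns $|\LS(h) - \LD(h)|$, consistent with how the lemma is invoked in Eq.~\ref{eqn:starter_ineq}), I let $A$ denote the event $\{\sup_{h \in \Set{H}} |\LS(h) - \LD(h)| > \epsilon\}$. On $A$ I would select, by a measurable choice depending only on $\Set{S}$, a hypothesis $h^\star = h^\star(\Set{S}) \in \Set{H}$ with $|\LS(h^\star) - \LD(h^\star)| > \epsilon$. The key inequality is then a triangle inequality,
\begin{align*}
|\LS(h^\star) - \LSp(h^\star)| \ \geq\ |\LS(h^\star) - \LD(h^\star)| - |\LD(h^\star) - \LSp(h^\star)|,
\end{align*}
so that on $A \cap \{|\LSp(h^\star) - \LD(h^\star)| \leq \epsilon/2\}$ we obtain $|\LS(h^\star) - \LSp(h^\star)| > \epsilon/2$, which forces $\sup_{h \in \Set{H}} |\LS(h) - \LSp(h)| > \epsilon/2$.

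With this event inclusion in place, I would take expectations by first conditioning on $\Set{S}$, which freezes $h^\star$. Given $\Set{S}$, the empirical loss $\LSp(h^\star)$ is the average of $m$ i.i.d.\ random variables bounded in $[0,1]$ with mean $\LD(h^\star)$, so Hoeffding's inequality applied to the ghost sample yields
\begin{align*}
\Prob_{\Set{S'}}\!\left[|\LSp(h^\star) - \LD(h^\star)| \leq \tfrac{\epsilon}{2} \,\middle|\, \Set{S}\right] \geq 1 - 2 e^{-\epsilon^2 m / 2}.
\end{align*}
Multiplying by $\mathds{1}\{A\}$ (which is $\Set{S}$-measurable), integrating over $\Set{S}$, and invoking the event inclusion above gives
\begin{align*}
(1 - 2 e^{-\epsilon^2 m / 2}) \, \Prob[A] \ \leq\ \Prob\!\left[\sup_{h \in \Set{H}} |\LS(h) - \LSp(h)| > \tfrac{\epsilon}{2}\right],
\end{align*}
which is precisely the asserted inequality.

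The main technical obstacle I anticipate is the measurable selection of $h^\star$ on $A$ when the supremum in the definition of $A$ is not attained and $\Set{H}$ is uncountable. I would dispatch this in the standard empirical-process fashion: fix an auxiliary slack $\eta > 0$, choose $h^\star$ satisfying $|\LS(h^\star) - \LD(h^\star)| > \epsilon - \eta$, run the entire argument with $\epsilon$ replaced by $\epsilon - \eta$ and $\epsilon/2$ by $(\epsilon - \eta)/2$, and then let $\eta \to 0$; alternatively, one can work under the usual permissibility/separability assumption on $\Set{H}$. Past this measurability bookkeeping, the proof reduces to one reverse triangle inequality combined with a single application of Hoeffding to the ghost sample, both of which are routine.
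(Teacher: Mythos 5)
Your proof is correct and is the standard symmetrization argument that the paper itself does not reproduce but merely cites to \cite{shalev-shwartz_ben-david_2022}; you also rightly read the left-hand probability as involving $\left|\LS(h)-\LD(h)\right|$ (the statement as printed repeats $\LSp$ on both sides, which is a typo), consistent with how the lemma is invoked in equation~\ref{eqn:starter_ineq}. The Hoeffding constant $2e^{-\frac{1}{2}\epsilon^2 m}$ checks out for a $[0,1]$-valued loss with deviation $\epsilon/2$ on the ghost sample, and your $\eta$-slack treatment of measurable selection is the usual and adequate fix.
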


\begin{proof}
See \cite{shalev-shwartz_ben-david_2022}.
\end{proof}

\begin{lemma} \label{lem:radem_bound}
Given any function class $\Set{F}$, for any $\Set{S}$ and $\Set{S'}$ drawn i.i.d. from $\Set{D}^m$ with $|\Set{S}| = |\Set{S'}| = m$, it holds that
\begin{align}
\mean_{\Set{S}, \Set{S'}} \left[ \sup_{f \in \Set{F}} \left( \frac{1}{m} \sum_{i=1}^{m} f(\Mat{x}_i) - \frac{1}{m} \sum_{i=1}^{m} f(\Mat{x'}_i) \right) \right] \le 2 \radem_m(\Set{F})
\end{align}
\end{lemma}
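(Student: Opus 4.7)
The plan is to apply the classical symmetrization trick that underlies Rademacher complexity bounds. Because $\Set{S}$ and $\Set{S'}$ are drawn i.i.d. from the same distribution $\Set{D}^m$, for every index $i$ the pair $(\Mat{x}_i, \Mat{x'}_i)$ is exchangeable. First I would introduce i.i.d. Rademacher variables $\sigma_1, \ldots, \sigma_m$ (independent of $\Set{S}$ and $\Set{S'}$) and observe that the joint law of $(f(\Mat{x}_i) - f(\Mat{x'}_i))_{i=1}^m$ is identical to that of $(\sigma_i (f(\Mat{x}_i) - f(\Mat{x'}_i)))_{i=1}^m$. Inserting this extra expectation over $\Mat{\sigma}$ rewrites the left-hand side as $\mean_{\Set{S}, \Set{S'}, \Mat{\sigma}} [ \sup_{f \in \Set{F}} \frac{1}{m} \sum_{i=1}^m \sigma_i (f(\Mat{x}_i) - f(\Mat{x'}_i)) ]$ without any loss.

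The second step is to decouple the two halves via subadditivity of the supremum: $\sup_f \sum_i \sigma_i (f(\Mat{x}_i) - f(\Mat{x'}_i)) \le \sup_f \sum_i \sigma_i f(\Mat{x}_i) + \sup_f \sum_i (-\sigma_i) f(\Mat{x'}_i)$. Taking expectation over all sources of randomness, I would use that $-\sigma_i$ is equidistributed with $\sigma_i$, and that $\Set{S}$ and $\Set{S'}$ are marginally identically distributed, so each of the two terms matches $\radem_m(\Set{F})$ by Definition \ref{dfn:radem_complexity}. Summing them yields the stated factor of $2$ and concludes the bound.

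This lemma is entirely standard, so I do not expect a genuine technical obstacle. The main thing to be careful about is that the symmetrization step is a \emph{distributional identity}, not a mere inequality: it rests on the product structure of the joint law of $(\Set{S}, \Set{S'})$ together with independence of the Rademacher variables from the samples. Once this exactness is explicit, the remainder is a one-line bookkeeping argument using subadditivity of $\sup$ and the symmetry of the Rademacher distribution.
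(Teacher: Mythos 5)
Your proposal is correct and follows essentially the same route as the paper's proof: symmetrization via the exchangeability of $(\Mat{x}_i,\Mat{x'}_i)$, insertion of Rademacher variables, splitting the supremum by subadditivity, and identifying each half with $\radem_m(\Set{F})$. If anything, you are slightly more careful than the paper in noting that the symmetrization step is an exact distributional identity rather than an inequality.
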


\begin{proof}
The proof is concluded by the following derivation:
\begin{align}
    &\mean \left[ \sup_{f \in \Set{F}} \LS(f) - \LSp(f) \right] \\
    &= \mean_{\Set{S}} \left[ \mean_{\Set{S^\prime}} \left[\sup_{f \in \Set{F}} \left\lvert \LS(f) - \LSp(f) \right\rvert  \right]   \right]\\
    &\leq \mean_{\Set{S},\Set{S^\prime}} \left[ \mean_{\sigma_{i}} \left[ \sup_{f \in \Set{F}} \left(\frac{1}{m} \sum_{i=1}^m \left(f(z_i) - \frac{1}{m}\sum_{i=1}^m f(z_i^\prime) \right) \right) \right]  \right] \label{eqn:intro_radem_vars} \\
    &\leq \mean_{\Set{S}, \Set{S^\prime}, \sigma_i} \left[ \sup _{f \in \Set{F}}\left(\frac{1}{m} \sum_{i=1}^m \sigma_i f\left(z_i\right)\right)+\sup _{f \in \Set{F}} \left(\frac{1}{m} \sum_{i=1}^m-\sigma_i f\left(z_i^{\prime}\right)\right) 
    \right]\\
    &= 2 \radem_m(\Set{F}),
\end{align}
where we introduce Rademacher random variables $\sigma_i, i = 1, \cdots, m$ in Eq. \ref{eqn:intro_radem_vars}.
\end{proof}

\begin{lemma} \label{lem:lips_radem}
Suppose $\Set{H} \subseteq \{h: \Set{X} \rightarrow \Set{Y} \}$ and function $\ell: \Set{Y} \times \real \rightarrow \real$ is a C-Lipschitz function, define define $\ell \circ \Set{H} = \{\ell \circ h: \forall h \in \Set{H}\}$, then $\radem_m(\ell \circ \Set{H}) \le C\radem_m(\Set{H})$.
\end{lemma}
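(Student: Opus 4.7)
The plan is to invoke the classical Ledoux--Talagrand contraction argument, working at the level of the empirical Rademacher complexity and then taking an outer expectation over $\Set{S}$ at the end. Starting from
\begin{equation*}
\radem_m(\ell \circ \Set{H}, \Set{S}) = \mean_{\sigma}\sup_{h \in \Set{H}}\frac{1}{m}\sum_{i=1}^{m}\sigma_i\,\ell(h(z_i)),
\end{equation*}
I would peel the Rademacher variables off one index at a time. Fix any $i \in [m]$, condition on all $\sigma_j$ with $j \ne i$, and write $A(h) \defeq \frac{1}{m}\sum_{j\ne i}\sigma_j\,\ell(h(z_j))$ so that only $\sigma_i$ remains random. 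The inner expectation over $\sigma_i \in \{\pm 1\}$ equals
\begin{equation*}
\tfrac{1}{2}\sup_{h \in \Set{H}}\!\bigl[A(h)+\tfrac{1}{m}\ell(h(z_i))\bigr] + \tfrac{1}{2}\sup_{h \in \Set{H}}\!\bigl[A(h)-\tfrac{1}{m}\ell(h(z_i))\bigr] = \tfrac{1}{2}\sup_{h_1,h_2 \in \Set{H}}\!\Bigl[A(h_1)+A(h_2)+\tfrac{1}{m}\bigl(\ell(h_1(z_i))-\ell(h_2(z_i))\bigr)\Bigr].
\end{equation*}

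The $C$-Lipschitz hypothesis gives $\ell(h_1(z_i))-\ell(h_2(z_i)) \le C\lvert h_1(z_i)-h_2(z_i)\rvert$. Since the resulting pair-objective is symmetric in $(h_1,h_2)$, the supremum can be taken with the convention $h_1(z_i)\ge h_2(z_i)$, dropping the absolute value. The bound then collapses back to a Rademacher expectation:
\begin{equation*}
\tfrac{1}{2}\sup_{h_1,h_2 \in \Set{H}}\!\Bigl[A(h_1)+A(h_2)+\tfrac{C}{m}\bigl(h_1(z_i)-h_2(z_i)\bigr)\Bigr] \;=\; \mean_{\sigma_i}\sup_{h \in \Set{H}}\Bigl[A(h)+\tfrac{C}{m}\sigma_i\,h(z_i)\Bigr].
\end{equation*}
Iterating the peeling for $i=1,\ldots,m$ replaces every $\sigma_i\,\ell(h(z_i))$ by $C\sigma_i\,h(z_i)$, and the resulting quantity is exactly $C\,\radem_m(\Set{H},\Set{S})$. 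Taking expectation over $\Set{S}\sim\Set{D}^m$ delivers $\radem_m(\ell\circ\Set{H}) \le C\,\radem_m(\Set{H})$.

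The only non-bookkeeping step is the symmetrization that removes the absolute value: one must argue that because the objective $A(h_1)+A(h_2)+C\lvert h_1(z_i)-h_2(z_i)\rvert$ is symmetric under swapping $(h_1,h_2)$, the supremum is unchanged if we restrict to pairs with $h_1(z_i)\ge h_2(z_i)$, at which point the absolute value is just a signed difference. Everything else is routine algebra and an induction on the index of peeling. I do not anticipate any obstacle specific to the SMoE setting here, since the statement is a generic property of Rademacher averages under Lipschitz composition.
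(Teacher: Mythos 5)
Your proof is correct: it is the standard Ledoux--Talagrand contraction argument (condition on all but one Rademacher variable, split the expectation over $\sigma_i$ into a supremum over pairs, apply the Lipschitz bound, remove the absolute value by the swap-symmetry of the pair objective, and recombine into a Rademacher average with $\ell$ stripped from coordinate $i$), iterated over the $m$ coordinates and followed by an outer expectation over $\Set{S}$. The paper does not prove this lemma at all --- it simply cites \cite{meir_zhang_2003} --- so you have supplied the argument that the reference contains. One cosmetic point: since $\ell$ is a two-argument function $\ell(y,\cdot)$, your single-argument notation $\ell(h(z_i))$ should be read as $\ell_i(h(\Mat{x}_i))$ with $\ell_i(\cdot)=\ell(y_i,\cdot)$ a $C$-Lipschitz function that may differ across $i$; the contraction argument goes through verbatim with coordinate-dependent Lipschitz maps, so this does not affect correctness.
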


\begin{proof}
    See \cite{meir_zhang_2003}.
\end{proof}

\begin{lemma} \label{lem:convex_hull_radem}
Suppose $\Set{H} \subseteq \{h: \Set{X} \rightarrow \Set{Y} \}$, then all functions constructed by convex combinations of $\Set{H}$ satisfies:
\begin{align}
\mean_{\Set{S}, \Mat{\sigma}} \left[ \sup_{\substack{h_j \in \Set{H}, \forall j \in [T] \\ \Mat{\lambda} \in \real_+^T, \lVert \Mat{\lambda} \rVert_1 = 1}} \frac{1}{m} \sum_{i=1}^{m} \sigma_i \sum_{j=1}^{T} \Mat{\lambda}_j h_j(\Mat{x}_i) \right] = \radem_m(\Set{H}).
\end{align}
\end{lemma}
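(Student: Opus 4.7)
The plan is to show both inequalities and conclude equality. The direction $\radem_m(\Set{H}) \le \text{(convex hull expression)}$ is immediate: for any fixed $h \in \Set{H}$, set $h_1 = h$, $\Mat{\lambda} = (1, 0, \cdots, 0)^\top$, so the convex-hull supremum is at least $\frac{1}{m}\sum_{i=1}^m \sigma_i h(\Mat{x}_i)$; taking supremum over $h \in \Set{H}$ and then expectation over $(\Set{S}, \Mat{\sigma})$ yields $\radem_m(\Set{H})$.

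For the nontrivial direction, the key observation is that for any fixed realization of $\Set{S}$ and $\Mat{\sigma}$, and for any fixed choice of $h_1, \cdots, h_T \in \Set{H}$, the inner sum
\[
\sum_{j=1}^{T} \Mat{\lambda}_j \left( \frac{1}{m} \sum_{i=1}^{m} \sigma_i h_j(\Mat{x}_i) \right)
\]
is a linear function of $\Mat{\lambda}$ over the probability simplex $\Delta_T = \{\Mat{\lambda} \in \real_+^T : \lVert \Mat{\lambda} \rVert_1 = 1\}$. A linear function on a compact convex polytope attains its maximum at a vertex, and the vertices of $\Delta_T$ are the standard basis vectors. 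Hence the supremum over $\Mat{\lambda}$ equals $\max_{j \in [T]} \frac{1}{m} \sum_{i=1}^{m} \sigma_i h_j(\Mat{x}_i)$.

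Next, I would push the supremum over $h_j \in \Set{H}$ inside the max. Since the $h_j$'s are independent choices from $\Set{H}$, we have
\[
\sup_{h_1, \cdots, h_T \in \Set{H}} \max_{j \in [T]} \frac{1}{m} \sum_{i=1}^{m} \sigma_i h_j(\Mat{x}_i) = \max_{j \in [T]} \sup_{h_j \in \Set{H}} \frac{1}{m} \sum_{i=1}^{m} \sigma_i h_j(\Mat{x}_i) = \sup_{h \in \Set{H}} \frac{1}{m} \sum_{i=1}^{m} \sigma_i h(\Mat{x}_i),
\]
where the last equality uses that the $T$ inner suprema are identical. Taking expectation over $(\Set{S}, \Mat{\sigma})$ yields exactly $\radem_m(\Set{H})$, completing the matching upper bound.

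I do not expect any real obstacle here; the only subtle point is justifying the swap between the supremum over $\Mat{\lambda}$ and over the expert tuple $(h_1, \cdots, h_T)$, which is clean because both suprema are finite (we are dealing with real-valued quantities) and the max-at-vertex argument decouples the problem into $T$ identical scalar suprema over $\Set{H}$. No measurability or boundedness assumption beyond what is implicit in the Rademacher-complexity definition is needed.
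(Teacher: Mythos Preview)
Your proposal is correct and follows essentially the same approach as the paper: both arguments reduce the supremum over the simplex to a maximum over vertices via the elementary inequality $\sum_{j} \Mat{\lambda}_j y_j \le \max_j y_j$ (with equality at the vertex picking the argmax), then collapse the $T$ identical suprema over $\Set{H}$ into one. Your version is slightly more explicit in treating the two inequalities separately, but the substance is identical.
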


\begin{proof}
\begin{align}
&\mean_{\Set{S}, \Mat{\sigma}} \left[ \sup_{\substack{h_j \in \Set{H}, \forall j \in [T] \\ \Mat{\lambda} \in \real_+^T, \lVert \Mat{\lambda} \rVert_1 = 1}} \frac{1}{m} \sum_{i=1}^{m} \sigma_i \sum_{j=1}^{T} \Mat{\lambda}_j h_j(\Mat{x}_i) \right] \\ &= \mean_{\Set{S}, \Mat{\sigma}} \left[ \sup_{\substack{h_j \in \Set{H}, \\ \forall j \in [T]}} \sup_{\substack{\Mat{\lambda} \in \real_+^T, \\ \lVert \Mat{\lambda} \rVert_1 = 1}} \frac{1}{m} \sum_{j=1}^{T} \Mat{\lambda}_j \left(\sum_{i=1}^{m} \sigma_i h_j(\Mat{x}_i)\right) \right] \\
&= \mean_{\Set{S}, \Mat{\sigma}} \left[ \sup_{h_{j^*} \in \Set{H}} \frac{1}{m} \sum_{i=1}^{m} \sigma_i h_{j^*}(\Mat{x}_i) \right] \label{eqn:conv_selec} \\
&= \radem_m(\Set{H}),
\end{align}
where Eq. \ref{eqn:conv_selec} uses the fact that $\sum_{j=1}^{T} \Mat{\lambda}_j y_j \le \max_{j=1,\cdots,T} y_j$ for coefficients $\Mat{\lambda}$. Moreover, the equality is achieved if and only if $\Mat{\lambda}_{j} = 1$ for $j = \argmax_{j=1,\cdots,T} y_j$ and $\Mat{\lambda}_{j} = 0$ otherwise.
\end{proof}

    


\end{document}